\theoremstyle{plain}
\newtheorem{theorem}{Theorem}[]
\theoremstyle{definition}
\theoremstyle{remark}
\ificcvfinal\pagestyle{empty}\fi
\newcommand{\affmark}[1][*]{\textsuperscript{#1}}
\begin{document}

\title{\vspace{-20pt}Beyond Empirical Risk Minimization: Local Structure Preserving Regularization for Improving Adversarial Robustness}

\author{Wei Wei\affmark[1],  Jiahuan Zhou\affmark[2], Ying Wu\affmark[1]\\
\affmark[1]Department of Electrical and Computer Engineering, Northwestern University, IL, USA\\
\affmark[2]Wangxuan Institute of Computer Technology, Peking University, Beijing, China\\
{\tt\small wwzjer@u.northwestern.edu, jiahuanzhou@pku.edu.cn, yingwu@northwestern.edu}
}

\maketitle

\begin{abstract}
It is broadly known that deep neural networks are susceptible to being fooled by adversarial examples with perturbations imperceptible by humans. Various defenses have been proposed to improve adversarial robustness, among which adversarial training methods are most effective. However, most of these methods treat the training samples independently and demand a tremendous amount of samples to train a robust network, while ignoring the latent structural information among these samples. In this work, we propose a novel Local Structure Preserving (LSP) regularization, which aims to preserve the local structure of the input space in the learned embedding space. In this manner, the attacking effect of adversarial samples lying in the vicinity of clean samples can be alleviated. We show strong empirical evidence that with or without adversarial training, our method consistently improves the performance of adversarial robustness on several image classification datasets compared to the baselines and some state-of-the-art approaches, thus providing promising direction for future research.
\end{abstract}


\section{Introduction}
\label{sec_1intro}

Deep neural networks (DNNs) trained under the Empirical Risk Minimization (ERM) paradigm have been tremendously successful on plenty of computer vision tasks like image classification \cite{lecun2015deep}. However, it has been shown that DNNs are often fragile to adversarial examples which are crafted by adding an imperceptible perturbation on the natural input image to yield incorrect network predictions \cite{biggio2013evasion,Szegedy2014Intriguing,Goodfellow2015Explaining}. Such phenomena exist as a severe threat to the applicability of DNNs, especially when deployed in safety-critical applications like autonomous driving.

Many pieces of work have been devoted to explaining the existence of adversarial examples~\cite{Goodfellow2015Explaining,shamir2019simple,ilyas2019adversarial}. Nevertheless, the community has not reached a consensus. One persuasive argument is that ERM forces the DNNs to memorize the training data thus fragile when evaluated on the adversarial examples which differ from the training distribution~\cite{sehwag2019analyzing}. To address the issue of vulnerability against adversarial examples, currently, the most effective defense strategy is adversarial training, which is a min-max optimization process to augment the DNNs with the generated adversarial examples on the fly during training \cite{Kurakin2017Adversarial,madry2018towards,zhang2019theoretically}. Yet, the methods of adversarial training are still framed under the scope of the ERM, \ie, treating each example independently and emphasizing memorizing the adversarial examples.

\begin{figure}[t]
    \centering
    \includegraphics[width=0.95\linewidth]{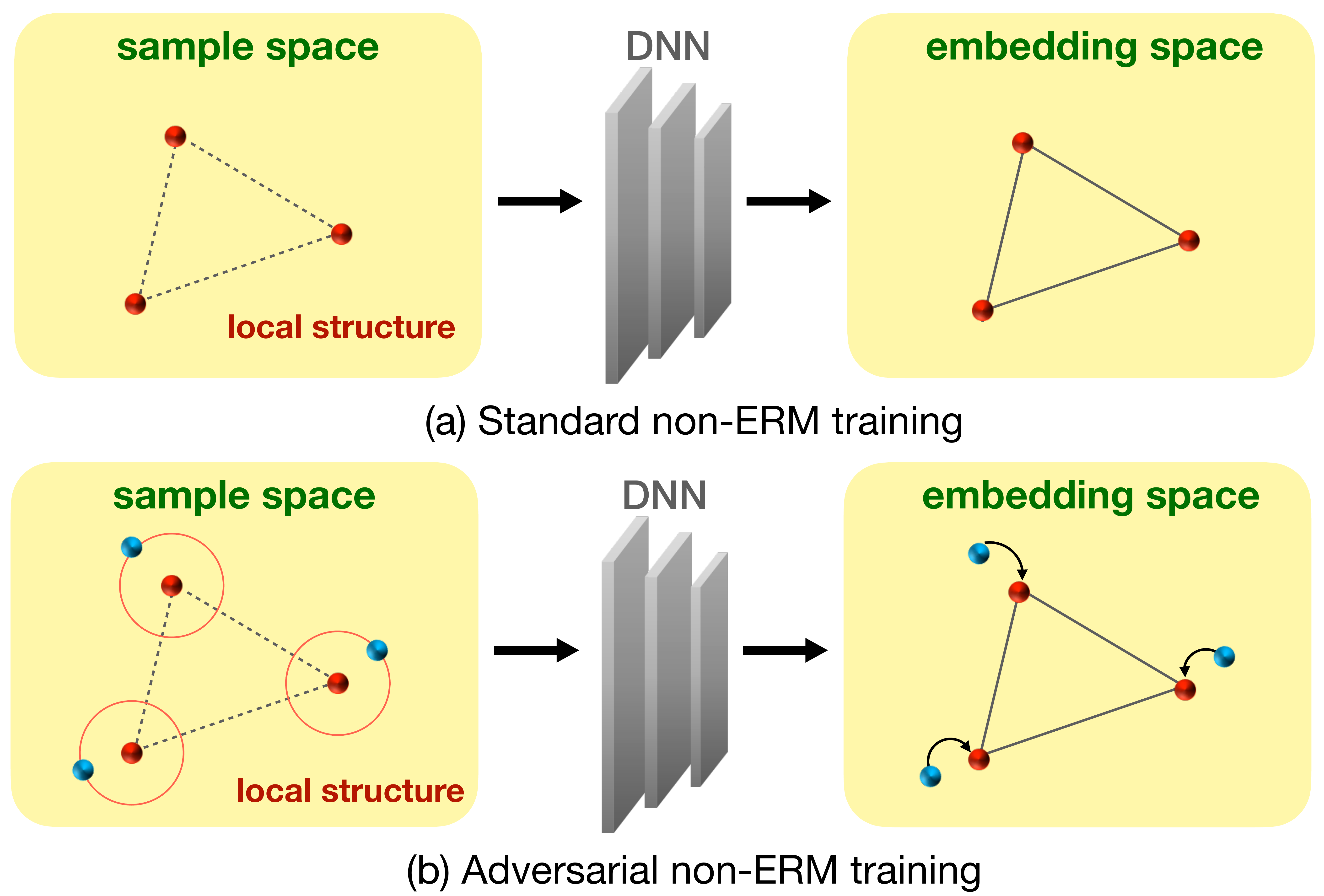}
    \caption{A schematic for Local Structure Preserving. For (a) standard training, the local structure in the sample space is constrained to be preserved in the embedding space. For (b) adversarial training, the adversarial example (\textcolor{blue}{blue} dot) not only needs to be mapped to the embedding of the corresponding natural example (\textcolor{red}{red} dot) but also the local structure on natural examples should not be destroyed.}
    \label{fig:fig1}
\end{figure}

Despite the significant improvement in adversarial robustness, one of the main challenges of adversarial training is \textbf{data insufficiency}. It has been theoretically verified that the amount of samples required to train a robust model is remarkably more enormous than that of standard training, especially for high-dimensional images~\cite{schmidt2018adversarially}. This explains the large gap between robust accuracy and standard accuracy when training with datasets with only limited samples.

Essentially, adversarial training under the ERM framework treats each sample independently and derives the loss function in a sample-wise manner, making it require a massive amount of training samples to infer the complicated data distribution of adversarial examples. However, if we could utilize the underlying data structure as prior knowledge, the strong demand for massive training samples could be alleviated. This motivates us to handle the data insufficiency challenge by exploring the internal structure that implicitly existed among training samples and investigating whether explicitly modeling them can benefit adversarial robustness. 

Considering the proximity of the adversarial example and its corresponding natural example in the sample space, we argue that the ideally robust DNNs should be capable to map the nearby examples in the sample space to the embeddings close to each other. For each sample, its nearest neighbors in the training dataset, which are supposed to lie in the vicinity of this sample on the local manifold, naturally provide useful local structure information of the input space. If the training samples within a neighborhood are mapped to be close to each other in the output embedding space, the learned networks could be more regular and reduce the possibility to map nearby adversarial examples to uncertain embeddings which bring out misclassification.

With this insight, our idea is to \textbf{preserve the local structure of the input samples in the learned embedding space}. More specifically, the sample proximity of the local manifold in the input space should not be violated during the feedforward process of the deep networks. We propose an extremely concise regularization term, named \textbf{Local Structure Preserving}, to force the proximity of the local structure of both input space and output embedding space to be as close as possible. Note that the metric of input space, which is usually unknown, can be naturally obtained through a pretrained deep network that functions as a feature extractor. Through this simple regularization, our learning process treats each training sample as \emph{non-i.i.d} and utilizes the latent interdependent relationship among them. As a consequence, it is not likely to map close samples in the input space to be farther away in the embedding space, which eases the vulnerability against adversarial attacks.

Moreover, our novel structural regularization can be simply incorporated into the adversarial training framework to further boost adversarial robustness. For a clean training sample, the local structure should not be altered significantly before and after it is adversarially perturbed. That means, the generated adversarial sample during adversarial training, should be mapped to similar positions in the embedding space as the corresponding clean one. Therefore, the output embedding of the adversarial example is not only close to the corresponding natural embedding but should be located within the local structure formed by its neighbors in the input space.

As a whole, our work mainly emphasizes the consideration of the structure that latently existed in the training samples, and thus is beyond the scope of ERM training. In this manner, the representation of the natural example or adversarial example is not independent for each sample, but follows a structure-wise alignment on each local manifold of the underlying data distribution. Thus the network prediction can be correlated by the local neighbors in the manifold. The idea of our non-ERM standard/adversarial training is depicted in Fig.~\ref{fig:fig1}.

In a nutshell, our contributions are three-fold:
\begin{compactenum}
    \item We propose a local structure preserving regularization for improving adversarial robustness. The structural information among samples is rarely investigated or used in previous adversarial learning methods.
    \item By explicitly modeling the interdependency among training data, the proposed method alleviates the notorious reliance on the massive amount of data to achieve adversarial robustness.
    \item We relate the proposed novel term with the goal of restricting the Lipschitz constant, which is known to be important for robustness. We report strong empirical results on several commonly used image classification datasets, demonstrating the improvement of adversarial robustness by our method. 
\end{compactenum}


\section{Related Work} 
\label{sec2_related}

\subsection{Adversarial Attacks and Defenses}
\label{sec2:adv}
Since the seminal works of~\cite{biggio2013evasion,Szegedy2014Intriguing} first discovered the vulnerability of deep neural networks and introduced the notion of adversarial examples that can attack well-trained networks to cause misclassification on these examples, a plethora of studies on generating aggressive adversarial attacks and establishing strong adversarial defenses have evolved. Early methods used the strategy of gradient obfuscation to defend against adversarial examples by intentionally masking the gradients since many attackers require the gradient information of the networks. Examples include defensive distillation~\cite{papernot2016distillation}, gradient shattering~\cite{buckman2018thermometer,guo2018countering}, stochastic gradients~\cite{Dhillon2018stochastic,xie2018mitigating},  vanishing/exploding gradients~\cite{song2018pixeldefend,samangouei2018defensegan}, \etc. However, it has been shown later that this strategy can be circumvented by stronger attacks \cite{carlini2017towards,athalye2018obfuscated,athalye2018synthesizing}.

In contrast, one of the most successful defensive strategies is adversarial training, which substitutes the clean training samples with the generated adversarial examples during training to improve the robustness of the deep networks. This was first introduced by \cite{Goodfellow2015Explaining}, where they used the adversarial examples generated by the Fast Gradient Sign Method (FGSM) to replace the original clean ones for training, and then generalized to be applied on large-scale datasets by \cite{Kurakin2017Adversarial}. However, the robustness against the FGSM attacker seems questionable to other attacks, as shown in \cite{tramer2018ensemble}, where they augmented the adversarial training set with the ensemble of adversarial examples crafted from several similar classifiers. Later, \cite{madry2018towards} used Projected Gradient Descent (PGD) adversarial examples, which they called the ``ultimate'' first-order adversary, in adversarial training and improved robustness to a wide range of adversarial attacks. Recently, multiple variants of adversarial training have been proposed. For example, TRADES~\cite{zhang2019theoretically} decomposed the adversarial training objective into two terms which control the natural accuracy and robustness as a trade-off. MART~\cite{Wang2020Improving} further incorporated an explicit differentiation of clean misclassified examples in the trade-off objective.

As claimed before, the adversarial training methods require tremendously many samples to learn a robust model~\cite{schmidt2018adversarially}. Some researchers augment millions of external data~\cite{torralba200880} into training ~\cite{gowal2020uncovering,carmon2019unlabeled,rebuffi2021data} to achieve the state-of-the-art performance. But the new challenge emerges as the extremely high requirement for computational resources. For example, to accomplish adversarial training in a reasonable time, Cloud TPUs are required. Even worse, the amount of data is limited itself for certain tasks, like medical image analysis. This motivates us to study from another angle -- to utilize the data structural relationship in a non-ERM manner.

\subsection{Non-ERM Learning Methods}
\label{sec2: non-erm}

Under the non-ERM training framework, our method treats each training sample as \emph{non-i.i.d.} and considers the structural modeling of the training samples through the idea that each sample can be influenced by its neighboring samples. A similar insight is shared in a series of data augmentation methods. For example, the method Mixup~\cite{zhang2018mixup} trained the networks on convex combinations of the pairs of examples and their labels. Following this, CutMix~\cite{yun2019cutmix}, AugMix~\cite{hendrycks2020augmix}, \etc, were proposed to augment more diverse mixup combinations of the training samples. These methods have been shown to benefit adversarial robustness. However, their methodology is significantly different from ours since their training pairs are randomly chosen and no structures are purposefully mined.

\section{Method} \label{sec3_model}


In this section, we present the proposed idea of Local Structure Preserving (\textsc{LSP}). First, in Sec.~\ref{sec3.1} and \ref{sec3.2}, we introduce our method in the settings without and with adversarial training. At the end of this section, we discuss the theoretical motivation of our idea.

\subsection{Local Structure Preserving} 
\label{sec3.1}

Suppose a natural example $x$ lies in a metric space $(\mathcal{X},||\cdot||)$, where $\mathcal{X}$ denotes the sample space and $||\cdot||$ denotes the metric. The input example $x$ is associated with a semantic class label $y\in \mathcal{Y}$, where $\mathcal{Y}$ denotes the label space. The goal is to learn a robust DNN classifier $f$ that maps from the sample space $\mathcal{X}$ to the label space $\mathcal{Y}$, by $N$ training pairs $\{(x_i,y_i)\}_{i=1}^N$ randomly sampled from the joint space $(\mathcal{X}, \mathcal{Y})$. The robustness hereby is toward adversarial examples $x^\prime$ which is crafted within the $\delta$-neighborhood (defined by $L_p$ norm $||\cdot||_p$) of the input $x$.

Next, we introduce our local structure preserving regularization. Suppose for an input example $x$, its nearest neighbors in the training dataset are $\{x_{n_1}, x_{n_2},..., x_{n_m}\}$, where $m$ denotes the area of neighborhood. We call $x$ as an anchor point and derive the differences of its neighbors to the anchor as:
\begin{equation}
    p_x^i = \frac{||x-x_{n_i}||}{\sum_{j=1}^m ||x-x_{n_j}||}, \quad i=1,2,..,m.
    \label{eq:pi}
\end{equation}
Here, the normalization in the denominator is applied to obtain the probability vector $P_x = [p_x^1, p_x^2, ..., p_x^m]$. This vector explicitly encodes the local structure information of each input example in the sample space. 

After the feedforward process of deep network $f$, the input $x$ and its neighbors are mapped to the output embedding space for classification, \ie, the layer output before softmax of the network. Again, we can still obtain the vector conveying the differences of $f(x)$ and $\{f(x_{n_1}), f(x_{n_2}),..., f(x_{n_m})\}$ in a similar manner as Eqn.~\ref{eq:pi}:
\begin{equation}
    q_x^i = \frac{||f(x)-f(x_{n_i})||}{\sum_{j=1}^m ||f(x)-f(x_{n_j})||}, \quad i=1,2,..,m.
    \label{eq:qi}
\end{equation}
Accordingly, the probability vector $Q_x = [q_x^1, q_x^2, ..., q_x^m]$ indicates the local structure of the output embedding space around the anchor.  

\begin{figure}
    \centering
    \includegraphics[width=0.9\linewidth]{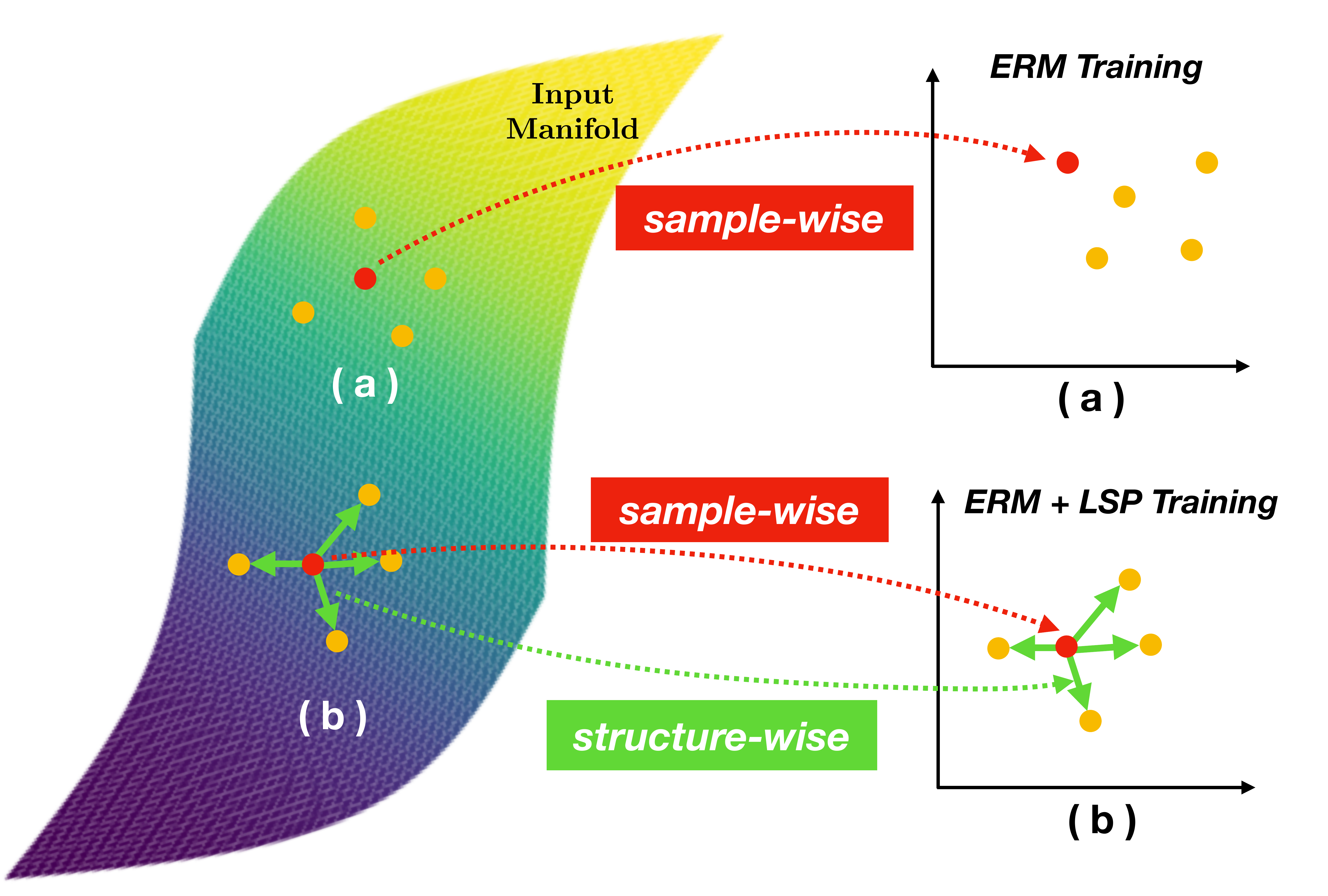}
    \vspace{-2mm}
    \caption{Left: the manifold represents the input space. Right: two coordinates represent the output embedding spaces. a) Only using sample-wise CE loss under the ERM training framework, the local structure of output space is not restricted and the order of local structure could be distorted. b) Regularizing ERM training with our proposed LSP, the local structure of input space is favored to be preserved in the output space.}
    \label{figure1}
\end{figure}

The goal of local structure preserving is to preserve these two structures, \ie, maintaining the proximity between closed examples of both spaces, expressed in the form of $P_x$ and $Q_x$. A straightforward way is to minimize the discrepancy as follows:
\begin{equation}
    \mathcal{L}_{\text{LSP}} = D(P_x,Q_x).
    \label{eq:dpq,non-adversarial}
\end{equation}
The scheme of defining discrepancy can be various. We discuss several terms, including KL-divergence, cosine similarity, and $L_1$/$L_2$ distance in Sec.~\ref{sec_ablation}. By introducing the regularization term in Eqn.~\ref{eq:dpq,non-adversarial}, we no longer treat each training sample as \emph{i.i.d.}, instead utilizing the local information among them.

However, a remaining unresolved problem is how we define the metric $||\cdot||$ in Eqn.~\ref{eq:pi} and \ref{eq:qi}. For Eqn.~\ref{eq:qi}, the metric is simply $L_2$ distance, since $f(x)$ encodes the feature representation and is normalized to follow an approximately Gaussian distribution. As for the metric in Eqn.~\ref{eq:pi}, since we have no clue about the distribution of the input space, one natural procedure is that we can use a pre-trained neural network as a feature extractor. Motivated by \cite{vangansbeke2020scan}, we use MoCo \cite{he2020momentum}, denoted by $g$, trained by contrastive loss in a self-supervised learning manner as the feature extractor to approximate the metric of input space, as follows:
\begin{equation}
    ||x_i - x_j|| \approx ||g(x_i) - g(x_j)||, \quad \forall x_i, x_j \in \mathcal{X}.
    \label{eqn:moco}
\end{equation}

Eqn.~\ref{eq:dpq,non-adversarial} serves as a regularization term in our method. Together with the traditional Cross Entropy loss:
\begin{equation}
    \mathcal{L}_{\text{CE}} = -\sum_{y\in \mathcal{Y}} y\log(f(x)_y),
    \label{eq:ce,non-adversarial}
\end{equation}
the total loss for training the network is as follows:
\begin{equation}
    \mathcal{L}_{\text{Total}} = \frac{1}{|\mathcal{X}|}\sum_{x\in\mathcal{X}} (\mathcal{L}_{\text{CE}} + \lambda \mathcal{L}_{\text{LSP}}),
    \label{eq:total,non-adversarial}
\end{equation}
where $\lambda$ is a balancing parameter of the two loss terms.

In Fig.~\ref{figure1}, we illustrate the insight behind our idea. Without the aid of local structure preserving regularization, training solely by Cross Entropy under the traditional ERM framework may lead to the result that the local structure of the output embedding space violates the original local structure in the input space, since no constraints are explicitly employed. Thus the adversarial examples which are also located within the neighborhood of the input sample may be mapped to the place far from the input in the output embedding space. By adding such an explicit structural prior constraint, the local structure around each input sample is preserved in the output embedding space. By regularizing each piece of the local manifold of the input space, the learned deep classifier is deemed to be more regular and potentially more resistant to adversarial attacks.

\begin{algorithm}[t]
   \caption{Local Structure Preserving}
   \label{algo}
\begin{algorithmic}[1]
   \STATE {\bfseries Input:} training data $\mathcal{D}=\{(x_i,y_i)\}_{i=1}^N$.
   \REPEAT
   \STATE Sample $(x,y)$ from $\mathcal{D}$. 
   \STATE Search $m$ nearest neighbors for $x$.
   \IF{standard training}
    \STATE Encode the local structure of $x$ in the input sample space by Eqn.~\ref{eq:pi}.
   \STATE Feed-forward to compute the loss in Eqn.~\ref{eq:total,non-adversarial}.
   \ELSIF{adversarial training}
    \STATE Encode the local structure of $x$ in the input sample space by Eqn.~\ref{eq:pi_adv}.
   \STATE Feed-forward to compute the loss in Eqn.~\ref{eq:total,adversarial}.
   \ENDIF
   \STATE Back-propagate to update the gradient of $f$.
   \UNTIL{training finished.}
\end{algorithmic}
\end{algorithm}

\subsection{LSP with Adversarial Training}
\label{sec3.2}

In Sec.~\ref{sec3.1} we introduced our method to improve the robustness of agnostic models. Here we note that our method can also be combined with adversarial training to further boost the robustness. 

During adversarial training, both the natural example $x$ and its adversarial example $x^\prime$ which is obtained by the inner maximization of the surrogate loss~\cite{madry2018towards,zhang2019theoretically} (\eg, Eqn.~\ref{eq:ce,non-adversarial}) are utilized. Different from Eqn.~\ref{eqn:moco} which requires a feature extractor $g$ beforehand, the distance in the natural input space can be approximated by their Euclidean distance in the representation space, 
\begin{equation}
    \tilde{p}_x^{i} \approx \frac{||f(x)-f(x_{n_i})||}{\sum_{j=1}^m ||f(x)-f(x_{n_j})||}, \quad i=1,2,..,m.
    \label{eq:pi_adv}
\end{equation}
We can readily search the nearest neighbors of each sample by maintaining a memory bank for storing the representations~\cite{wu2018unsupervised}. We initialize the memory bank with Gaussian random vectors and during each training iteration,  the representations in the memory bank will be updated. We will show in Supp. the fidelity of the local structure learned by mining the nearest neighbors from the memory bank which conveys faithful semantic relationship in the natural input space.

Next, we intend to preserve the local structure of the natural input space during adversarial training. Likewise, we can also model the local structure in the adversarial space:
\begin{equation}
    \tilde{q}_x^{i} = \frac{||f(x^\prime)-f(x_{n_i}^{\prime})||}{\sum_{j=1}^m ||f(x^\prime)-f(x_{n_j}^\prime)||}, \quad i=1,2,..,m.
    \label{eq:qi_adv}
\end{equation}

To enforce adversarial robustness, it is common to regularize the resemblance of the representation $f(x)$ of natural example and the representation $f(x^\prime)$ of adversarial example~\cite{zhang2019theoretically}. 
However, this still handles each sample independently without exploiting the local structure among data. As a comparison, we consider aligning the local structure from the natural and adversarial spaces, which regularizes the resemblance of the proximity of neighboring samples. Similar to Eqn.~\ref{eq:dpq,non-adversarial}, we minimize the discrepancy between:
\begin{equation}
    \tilde{\mathcal{L}}_{\text{LSP}} = D(\tilde{P}_x||\tilde{Q}_x),
    \label{eqn:lsp_adv}
\end{equation}
where $\tilde{P}_x = [\tilde{p}_x^1, \tilde{p}_x^2, ..., \tilde{p}_x^m]$,  $\tilde{Q}_x = [\tilde{q}_x^1, \tilde{q}_x^2, ..., \tilde{q}_x^m]$. By regularizing Eqn.~\ref{eqn:lsp_adv}, we aim to keep the distances of locally neighboring samples in the natural input space consistent in the corresponding adversarial space. By this regularization, we upgrade the alignment from a sample-wise manner to a local-region-wise manner. Thus, it is expected that the learned DNN will be more regular, as a consequence, to improve adversarial robustness. 

Combining with the common adversarial training loss~\cite{madry2018towards}:
\begin{equation}
    \tilde{\mathcal{L}}_{\text{CE}} = -\sum_{y\in \mathcal{Y}} y\log(f(x^\prime)_y),
    \label{eq:ce,adversarial}
\end{equation}
the total loss for adversarially training the network is:
\begin{equation}
    \tilde{\mathcal{L}}_{\text{Total}} = \frac{1}{|\mathcal{X}|}\sum_{x\in\mathcal{X}} (\tilde{\mathcal{L}}_{\text{CE}} + \lambda \tilde{\mathcal{L}}_{\text{LSP}}),
    \label{eq:total,adversarial}
\end{equation}
The pseudo-code of our method is in Algorithm \ref{algo}.

\subsection{Theoretical Analysis} \label{sec3.3}

In the previous work, \cite{Hein2017Formal,cisse2017parseval,cohen2019certified} studied to reduce the Lipschitz constant of the network $f$ with respect to an input example $x$, to improve the robustness. To highlight the theoretical motivation of LSP, we use the following off-the-shelf theorem to demonstrate that if the classifier is a Lipschitz function, the adversarial robustness can be guaranteed, and then analyze its relationship to LSP in an \emph{asymptotic} perspective. 

\begin{theorem} \label{theorem}
Suppose $a,b\in\mathcal{Y}$ are the most and second-most likely class prediction of the classifier $f$, i.e., 
\begin{equation}
    a = \arg\max_{l\in\mathcal{Y}} f(x)_l, \quad  
    b = \arg\max_{l\in\mathcal{Y}\backslash\{a\}} f(x)_l,
\end{equation}
and $p_a, p_b$ are the predictive probability of the classifier w.r.t. label $a$ and $b$, i.e.,
\begin{equation}
    p_a = f(x)_a, \quad p_b = f(x)_b.
\end{equation}
If function $f$ is locally $L_x$-Lipschitiz on input $x\in \mathcal{X}$, i.e., 
\begin{align}
    & \forall x\in X, \forall x^\prime, \text{s.t. } ||x^\prime-x||\le\delta, \quad \\
    &||f(x) - f(x^\prime)|| \le L_x ||x-x^\prime||.
    \label{eq:theorem}
\end{align}
Then $f$ is guaranteed to be robust at $x$ up to any perturbation of magnitude $\delta \le \frac{1}{2L_x}(p_a-p_b)$.
\end{theorem}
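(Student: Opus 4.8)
The plan is to read ``robust at $x$'' as the statement that the predicted top label does not change under any admissible perturbation, and to certify this by tracking how the prediction margin between the winning class $a$ and every competing class can shrink when we move within the $\delta$-ball. First I would fix an arbitrary $x'$ with $\|x'-x\|\le\delta$ and note that $f$ still assigns label $a$ to $x'$ exactly when $f(x')_a \ge f(x')_l$ for every $l\in\mathcal{Y}\setminus\{a\}$. Since $b$ is the second-most-likely class at $x$, we have $p_b = f(x)_b \ge f(x)_l$ for all such $l$, so I would argue that $b$ is the hardest competitor to rule out: it suffices to keep the margin $f(x')_a - f(x')_b$ nonnegative, because the same bound then dominates $f(x')_a - f(x')_l$ for every other $l$.

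Next I would convert the vector Lipschitz bound of Eqn.~\ref{eq:theorem} into a coordinatewise bound. For any coordinate $l$ one has $|f(x)_l - f(x')_l| \le \|f(x)-f(x')\| \le L_x\|x-x'\| \le L_x\delta$, using that each entry of a vector is bounded in magnitude by the vector's norm. Applying this to $l=a$ and $l=b$ and rearranging with the triangle inequality then yields
\begin{align}
f(x')_a - f(x')_b
&\ge \bigl(f(x)_a - f(x)_b\bigr) \notag \\
&\quad - |f(x)_a - f(x')_a| - |f(x)_b - f(x')_b| \notag \\
&\ge (p_a - p_b) - 2L_x\delta .
\end{align}

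Finally I would substitute the hypothesis $\delta \le \frac{1}{2L_x}(p_a-p_b)$, which forces the right-hand side to be nonnegative; hence $f(x')_a \ge f(x')_b \ge f(x')_l$ for every $l$, the top prediction remains $a$, and $x'$ is classified as at $x$. The main subtlety to get right is the extraction of the coordinate gap from the norm bound, which quietly assumes the output norm dominates each coordinate (valid for any $L_p$ norm with $p\ge 1$), together with the observation that the factor $2$ rather than $1$ appears because the $a$- and $b$-coordinates may drift in opposite directions, each by as much as $L_x\delta$. Everything else reduces to a routine margin estimate, so I do not expect any genuine obstacle beyond fixing the worst-case competitor and the componentwise reduction.
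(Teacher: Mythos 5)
Your proof is correct and takes essentially the same route as the paper: both extract the coordinatewise bound $|f(x')_l - f(x)_l| \le L_x\delta \le \tfrac{1}{2}(p_a-p_b)$ from the vector Lipschitz condition and finish with a margin estimate (the paper phrases it as $f(x')_a$ and every $f(x')_l$, $l\ne a$, landing on opposite sides of the midpoint $\tfrac{p_a+p_b}{2}$, while you write the equivalent $f(x')_a - f(x')_l \ge (p_a-p_b) - 2L_x\delta \ge 0$). One small slip: your closing chain ``$f(x')_a \ge f(x')_b \ge f(x')_l$'' is not justified, since the perturbation can reorder the non-top classes, but this is harmless because your margin bound applies verbatim to every competitor $l\ne a$ (using $f(x)_l \le p_b$), which is all the conclusion requires.
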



The proof will be given in Supp. Theorem \ref{theorem} implies that if the learned classifier satisfies the Lipschitz condition (Eqn.~\ref{eq:theorem}), certified robustness against the adversarial examples within the neighborhood of input $x$ can be realized. The upper bound of perturbation magnitude $\delta$ is influenced by two factors, $L_x$ and $(p_a-p_b)$. For the latter one, the explicit goal of classification is to make it larger, which indicates more confidence in the prediction of true label. As for $L_x$, from an \emph{asymptotic} view, if the training samples near the input $x$ are extremely dense, minimizing the LSP regularization in Eqn.~\ref{eq:dpq,non-adversarial} is equivalent to minimizing the discrepancy of $||f(x)-f(x^\prime)||$ and $||x-x^\prime||$ in Eqn.~\ref{eq:theorem}, thus $L_x$ is controllable as a constant. With constant denominator $L_x$ and larger numerator $(p_a-p_b)$ during training, the robustness radius $\delta$ of our model is expected to be larger.

Although such asymptotic analysis is ideal and can hardly be practical, the conducted experimental results in Sec.~\ref{sec_4exp} evidence the effectiveness of our method in real, thus our idea that treating each sample as \emph{non-i.i.d.} and preserving the local structure is verified to be effective.

\section{Experiments}
\label{sec_4exp}

In this section, we conduct extensive experiments to demonstrate the effectiveness of the proposed idea in boosting the performance of adversarial robustness against various adversarial attacks.

\subsection{Experimental Setups}
\label{sec:4.1}

\noindent
\textbf{Datasets:} We evaluate the performance of our method on four public image classification datasets: 
(1) CIFAR-10~\cite{krizhevsky2009learning}, which contains a training set of 50,000 examples and a testing set of 10,000 examples. Each example is a 32x32 color image associated with a label from 10 classes; 
(2) STL-10~\cite{coates2011analysis}, which has 10 classes, with 500 training images and 800 testing images per class. Each example is a 96x96 color image; 
(3) Street-View House Numbers (SVHN)~\cite{netzer2011reading}, which consists of 73,257 digits for training, and 26,032 digits for testing.  Each example is a 32x32 color image associated with a label from 10 classes; 
(4) Tiny-ImageNet~\cite{deng2009imagenet}, which contains 200 classes and each class has 500 images for training and 50 images for testing. Each example is a 64x64 color image.
We train with an NVidia RTX 2080 GPU with 24G memory for all datasets.


\noindent
\textbf{Evaluation Protocol:}
To evaluate the adversarial robustness, we use the three most common adversaries to attack all comparison methods, including (1) CW~\cite{carlini2017towards}: an optimization-based attacker that tries to find the minimal-distorted perturbation. The scheme is by minimizing the margin loss to generate adversarial examples that have minimal difference between the logit values of the most and second-most likely classes. (2) PGD~\cite{madry2018towards}: multi-step optimization with the random initialization, which was claimed to be the strongest first-order adversary. (3) AutoAttack (AA)~\cite{croce2020reliable}, currently the most reliable evaluation of adversarial robustness that is composed of an ensemble of four diverse attacks including three white-box attacks (APGD-CE~\cite{croce2020reliable}, APGD-DLR~\cite{croce2020reliable}, and FAB~\cite{croce2020minimally}) and one black-box attack (Square Attack~\cite{andriushchenko2020square}). Unless otherwise stated, we use $L_\infty$-norm bounded perturbations for all adversarial attacks. Further, we also report the clean accuracy where the model is under no attacks.

\begin{table*}
\begin{center}
{\scriptsize
\begin{tabular}{l cccc cccc cccc cccc}
 \toprule
 \multicolumn{1}{c}{\textbf{Methods}} & \multicolumn{4}{c}{CIFAR-10} & \multicolumn{4}{c}{STL-10} & \multicolumn{4}{c}{SVHN} & \multicolumn{4}{c}{Tiny-ImageNet} \\ \cmidrule(rl){2-5} \cmidrule(rl){6-9} \cmidrule(rl){10-13} \cmidrule(rl){14-17} 
 & Clean & CW & PGD & AA & Clean & CW & PGD & AA & Clean & CW & PGD & AA & Clean & CW & PGD & AA \\
 \midrule
 \textsc{Vanilla} & 94.56 & 3.68 & 6.26  & 2.44 & 78.97 & 48.12 & 19.58 & 17.52 & 96.55 & 58.03 & 47.80 & 40.60 & 65.37 & 25.85 & 2.56 & 1.62 \\ 
 \textsc{Mixup}  & 94.96 & 47.78 & 12.23 & 0.17 & \textbf{80.63}  & 46.95 & 14.31 & 8.76 & 96.65 & 62.74 & 23.87 & 6.69  & 66.51 & 28.05 & 4.83 & 1.19 \\
 \textsc{CutMix} & \textbf{95.87} & 20.73 & 13.14 & 0.10 & 78.13 & 48.56 & 19.88 & 16.88 & \textbf{97.29} & 42.88 & 26.95 & 1.34 & \textbf{68.94} & 24.77 & 2.58 & 0.45 \\ 
 \textsc{AugMix} & 94.46 & 16.33 & 16.17 & 9.08 & 76.12 & 49.68 & 24.06 & 20.83 & 96.87 & 45.71 & 44.93 & 36.39 & 65.52 & 32.55 & 5.11 & 3.17 \\ 
 \midrule
 \textsc{Lsp} & 94.22 & \textbf{62.53} & \textbf{28.00} & \textbf{15.11} & 76.85 & \textbf{52.49} & \textbf{25.69} & \textbf{23.69} & 96.78 & \textbf{78.86} & \textbf{52.92} & \textbf{45.74} & 67.58 & \textbf{36.08} & \textbf{7.79} & \textbf{5.28} \\ 
 \bottomrule
\end{tabular}
}
\vspace{-5pt}
\caption{The classification accuracy comparison of methods by standard training on different datasets. Numbers in bold indicate the best. Note that we achieve the strongest robustness against all attacks under all datasets.}
\label{table:non-adv-train}
\end{center}
\vspace{-20pt}
\end{table*}

\subsection{Comparison and Result Analysis}
\label{sec:4.2}
We conduct experiments under two settings -- w/wo adversarial training, in order to show our idea is beneficial to adversarial robustness \emph{per se}, and can further boost the performance of adversarial training. We denote our method for standard and adversarial training as \textsc{LSP} and \textsc{LSP+}.

\subsubsection{Standard Training}
\label{sec:4.2.1}

\noindent
\textbf{Comparison Methods:}
For the setting of standard training, \ie, adversarial examples are not generated during training, we compare with: 1) \textsc{Vanilla}, baseline classification networks by Cross Entropy loss; 2) \textsc{Mixup}~\cite{zhang2018mixup}, which is trained on convex combinations of pairs of samples and their labels; and its successors 3) \textsc{CutMix}~\cite{yun2019cutmix}; 4) \textsc{AugMix}~\cite{hendrycks2020augmix}. For the latter three methods, we compare with them since they also consider the structure among samples that share a similar spirit with our idea (but with totally different motivations and methodologies), and have provided evidence to improve adversarial robustness.

\noindent
\textbf{Experimental Settings:}\!
For CIFAR-10, STL-10 and SVHN, we use ResNet-18~\cite{he2016deep} as the backbone. For Tiny-ImageNet, we use Preact ResNet-18~\cite{he2016identity} as the backbone. The backbones are consistent for all comparison methods to ensure fairness. For all methods, we train 100 epochs with the optimizer of SGD, and the learning rate is initially set as 0.1, divided by 10 in the epoch 75 and 90. For the proposed local structure preserving term, we use MSE loss, \ie, minimizing the $L_2$ distance in Eqn.~\ref{eq:dpq,non-adversarial} and the balancing parameter $\lambda$ in Eqn.~\ref{eq:total,non-adversarial} is tuned to be 1. The number of nearest neighbors $m$ is 8. Ablation studies of these hyperparameters are provided in Sec.~\ref{sec_ablation} and Supp. The feature extractor is trained with the same data and backbone used in robustness evaluation. We select instance discrimination~\cite{wu2018unsupervised} as the pretext task.

For the attack parameters in evaluation, since not augmented with adversarial examples during training, it is hard to expect these methods can defend against adversarial attacks with the commonly used maximum perturbation budget $\delta=8/255$. Therefore, we seek to set the budget as a milder $2/255$ for all attacks. For the CW attack, the box-constraint parameter $c$ is 0.01. For the PGD attack, the attacking step is 10 with step size $1/255$.

\noindent
\textbf{Results Analysis:}
In Tab.~\ref{table:non-adv-train}, we show the results of all methods that are without adversarial training. Basically, \textsc{Mixup}, \textsc{CutMix} and \textsc{AugMix}, which are also in the scope of the non-ERM training framework, can improve the robustness over \textsc{Vanilla} under certain circumstances, but the improvement is not consistent. For example, \textsc{Mixup} decreases the robustness against PGD on STL-10 and SVHN, \textsc{CutMix} decreases the robustness against CW on SVHN and Tiny-ImageNet, and \textsc{AugMix} decreases the robustness against both CW and PGD on SVHN. Even worse, in consideration of the most reliable robustness evaluation by AA, \textsc{Mixup} and \textsc{CutMix} worsen the robustness on all datasets, while \textsc{AugMix} worsens the robustness on SVHN. 

\begin{figure}[t]
\begin{center}
    \begin{subfigure}[b]{0.48\textwidth}
    \centering
    \includegraphics[width=\linewidth]{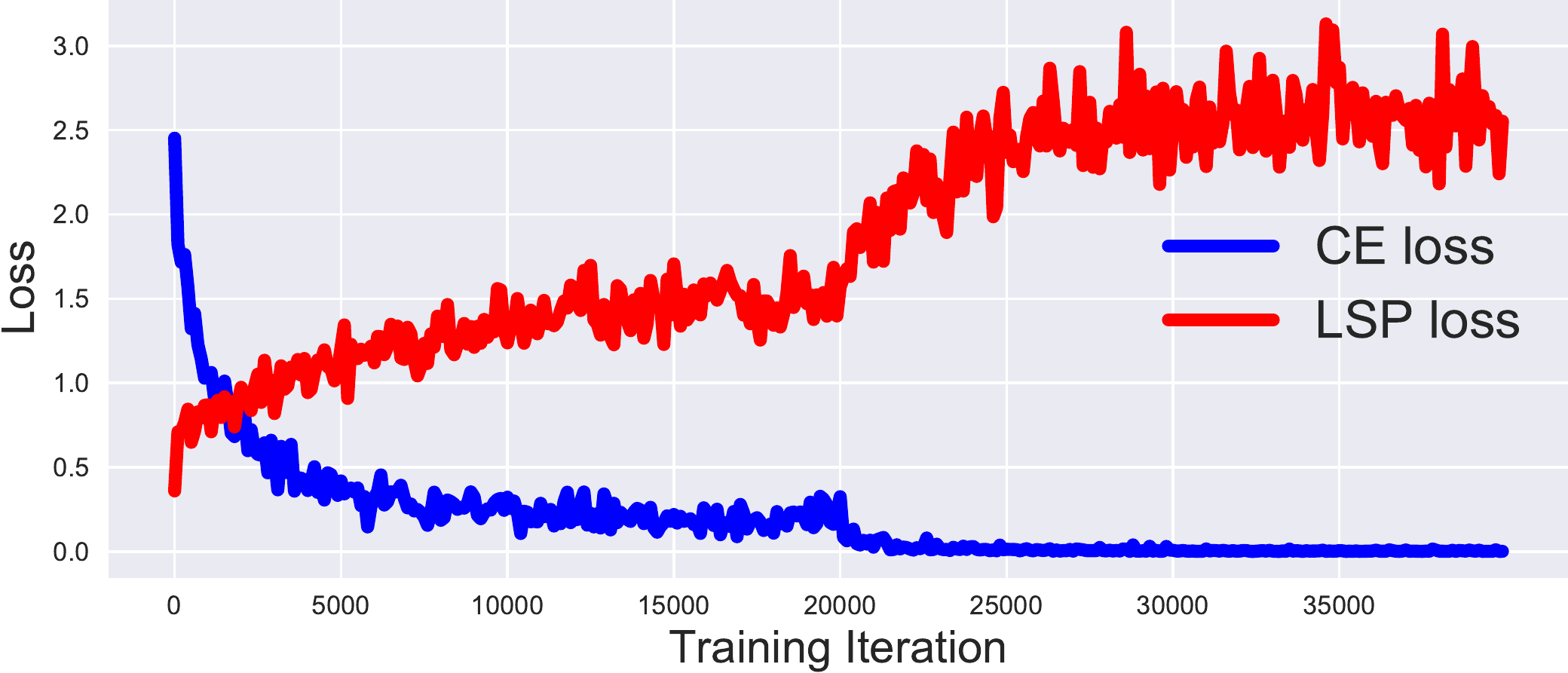}
    \end{subfigure}
    \begin{subfigure}[b]{0.48\textwidth}
    \includegraphics[width=\linewidth]{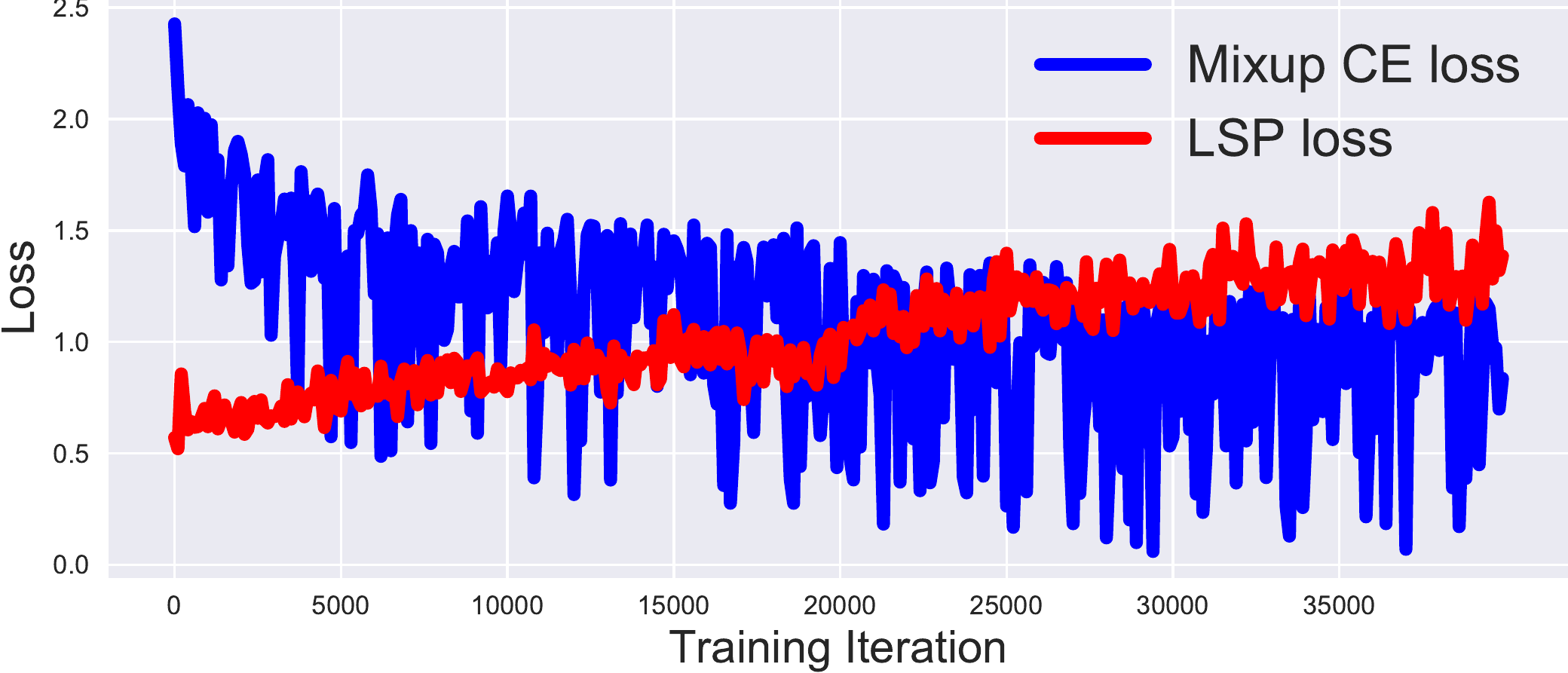}
    \end{subfigure}
    \vspace{-20pt}
    \caption{The training loss curves of \textsc{Vanilla} (top) and \textsc{Mixup} (bottom) on CIFAR-10. Model is trained without LSP regularization. For both methods, LSP loss increases during training.}
    \label{figure3}
\end{center}
\vspace{-20pt}
\end{figure}

As a comparison, our method consistently improves the robustness performance against all attacks over \textsc{Vanilla}, and shows superior robustness over all three non-ERM methods, just by adding the local structure preserving regularization term. This indicates that preserving the local structure in the original space during training can make the network training more regular and better repair the ``pockets'' in the data manifold~\cite{Szegedy2014Intriguing} which turns into adversarial examples. However, three non-ERM comparison methods only provide flimsy robustness, and we suppose the reason is that only restricting the data structure from the convex combination perspective is insufficient to provide reliable robustness due to the extreme non-linearity of high-dimensional data space. On the contrary, our operation on the data structure is more general, leading to stronger robustness. Also, note that the defensive effect of \textsc{LSP} is most prominent against CW attack, particularly for CIFAR-10. We analyze the reason is due to that CW attack generates adversarial examples by encouraging the logit value of the second-most likely class in the output embedding vector to exceed the one of the most likely class, while our method imposes prior information on the output embedding, thus not easy to be fooled.

In Fig.~\ref{figure3}, we depict the loss curves of two terms, Cross Entropy and LSP, for both \textsc{Vanilla} and \textsc{Mixup} model. The Cross Entropy losses of both models exhibit a decreasing trend as anticipated, while the LSP losses keep rising. For \textsc{Vanilla}, this indicates that ERM memorizes the training data while may destroy the interior data structure, thus leading to the poor robustness on adversarial examples which are slightly different distributed. While for \textsc{Mixup}, the LSP loss is relatively smaller but still increasing, which means the data structure is still undermined. Therefore, the LSP loss needs to be explicitly optimized and our performance shows its importance on adversarial robustness.

\subsubsection{Adversarial Training}

\begin{table*}[t]
\begin{center}
{\scriptsize
\begin{tabular}{l cccc cccc cccc cccc}
 \toprule
 \multicolumn{1}{c}{\textbf{Methods}} & \multicolumn{4}{c}{\textbf{CIFAR-10}} &
 \multicolumn{4}{c}{\textbf{CIFAR-100}} &
 \multicolumn{4}{c}{\textbf{SVHN}} & \multicolumn{4}{c}{\textbf{Tiny-ImageNet}} \\ \cmidrule(rl){2-5} \cmidrule(rl){6-9} \cmidrule(rl){10-13} \cmidrule(rl){14-17}
 & Clean & CW & PGD & AA & Clean & CW & PGD & AA & Clean & CW & PGD & AA & Clean & CW & PGD & AA \\
 \midrule
 \textsc{AT}       & 83.60 & 50.91 & 52.37 & 47.15 & 57.02 & 26.07 & 27.34 & 24.04 &92.02 & 54.51 & 57.19 & 49.50 & 48.79 & 21.09 & 23.38 & 16.92\\
 \textsc{TRADES}   & 81.77 & 50.96 & 52.86 & 48.99 & 56.33 & 24.47 & 28.62 & 23.62 & 89.69 & 56.30 & 60.11 & 53.15 & 50.01 & 18.33 & 22.93 & 16.90\\
 \textsc{MART}     & 78.51 & 50.26 & \textbf{55.29} & 47.79 & 51.15 & 25.67 & \textbf{29.36} & 24.10 & 89.83 & 54.89 & 61.22 & 49.64 & 44.35 & 19.62 & 23.48 & 17.01\\
 \textsc{HAT}      & \textbf{84.75} & 50.95 & 52.98 & 48.64 & 59.10 & 24.55 & 27.37 & 23.47 & 91.98 & \textbf{58.51} & 61.11 & 51.43 & 51.83 & 18.25 & 22.16 & 16.83\\
 \midrule
 \textsc{LSP+}     & 84.74 & \textbf{51.01} & 53.53 & \textbf{49.26} & \textbf{59.68} & \textbf{26.71} & 28.34 & \textbf{24.55} & \textbf{92.61} & 58.29 & \textbf{62.33} & \textbf{53.90} & \textbf{52.57} & \textbf{21.98} & \textbf{23.58} & \textbf{17.63}\\
 \bottomrule
\end{tabular}
\vspace{-5pt}
\caption{\label{table:adv-train}The classification accuracy comparison of methods by Adversarial Training. Numbers in bold indicate the best. Note that we achieve the strongest robustness against the most reliable robust evaluation attack AA under all datasets.}
}
\end{center}
\vspace{-20pt}
\end{table*}

\noindent
\textbf{Comparison Methods:} In this experiment, we compare our performance with various state-of-the-art adversarial training methods, including 1) \textsc{AT} \cite{madry2018towards}, which substitutes the original training samples with the PGD adversarial examples in training and is known for the resistance to a wide range of adversarial attacks; 2) \textsc{TRADES} \cite{zhang2019theoretically}, which regularizes to trade adversarial robustness off against clean accuracy, and we set trade-off parameter as 6 for optimal robustness following their paper; 3) MART~\cite{Wang2020Improving}, which extends TRADES by considering a regularized adversarial loss which incorporates an explicit differentiation of misclassified examples. The balancing parameter in MART is 5 following their paper; 4) HAT~\cite{rade2022reducing}, a recent state-of-the-art adversarial training method that achieves a better accuracy-robustness trade-off by reducing the excessive margin of clean samples. We exactly follow their reported hyperparameters.

\noindent
\textbf{Experimental Settings:}
Here we consider the four commonly evaluated dataset -- CIFAR-10/100, SVHN, and Tiny-ImageNet for adversarial robustness.
The backbone for each dataset is kept the same as in Sec.~\ref{sec:4.2.1}. We follow the training settings in~\cite{rade2022reducing}. The adversarial example generated during training is by PGD attack with perturbation budget $\delta=8/255$ and 10 steps. We adopt the early stopping scheme~\cite{rice2020overfitting} to evaluate the model with the best adversarial accuracy during training. For testing, CW attack refers to the PGD attack with CW loss with  $\delta$ as 8/255.

\noindent
\textbf{Results Analysis:} 
As shown in Tab.~\ref{table:adv-train}, when evaluating with the most reliable robustness metric AA, our method achieves better performance than all comparison methods on all datasets. Compared to our baseline method \textsc{AT}, all evaluation metrics are improved, where some are of them significant, like AA for SVHN. Notably, the clean accuracy of \textsc{LSP+} is also better or on par with all comparison methods. This again verifies the effectiveness of our method, where we show the plausibility of the proposed LSP regularization by consistent improvement over baseline without this regularization and competitive performance over other strong defensive methods.

\subsection{Evaluation with Larger Models}
\label{sec:benchmark}

\begin{table}[t]
\centering
{\small
\begin{tabular}{l c c rr}
\toprule 
 Dataset & Backbone & Method & Clean & AA \\
\midrule 
\multirow{9}*{CIFAR-10} &
\multirow{9}*{WRN-34-10}  
& PGD-AT~\cite{madry2018towards} &  87.14 & 44.04 \\
~& & YOPO~\cite{zhang2019you}  & 87.20 & 44.83 \\
~& & TLA~\cite{mao2019metric}  & 86.21 & 47.41 \\
~& & TRADES~\cite{zhang2019theoretically}  & 84.92 & 52.64 \\
~& & FAT~\cite{zhang2020attacks} & 84.52 & 53.51 \\
~& & BoT~\cite{pang2021bag} & 85.48 & 53.80 \\
~& & LBGAT~\cite{cui2021learnable} & 88.22 & 52.86 \\
~& & AWP~\cite{wu2020adversarial} & 85.36 & 56.17 \\
\cmidrule{3-5}
~& & LSP+ & \textbf{88.31} & \textbf{57.21} \\

\midrule


\multirow{5}*{CIFAR-100} &
\multirow{5}*{WRN-34-10} 
& TRADES~\cite{zhang2019theoretically} & 56.50 & 26.87 \\
~& & LBGAT~\cite{cui2021learnable} & 60.64 & 29.33 \\
~& & HAT~\cite{rade2022reducing} & 61.50 & 28.88 \\
\cmidrule{3-5}
~& & LSP+ & \textbf{62.31} & \textbf{28.99} \\


\bottomrule
\end{tabular}
}
\vspace{-5pt}
\caption{The evaluation of clean accuracy and AutoAttack robustness of our method in comparison with several state-of-the-art methods in RobustBench. Note that no additional real/synthetic data is used for all methods.}
\label{table:benchmark}
\vspace{-10pt}
\end{table}

\noindent 
\textbf{Experimental Setup:}
To further evaluate the performance of the proposed method, we compare our method with several state-of-the-art adversarial defensive methods in RobustBench~\cite{croce2020robustbench} using larger backbone networks. It is noteworthy that we do not aim to claim that our method is the strongest defensive method. As mentioned at the end of Sec.~\ref{sec2:adv}, the state-of-the-art performance in RobustBench relies tremendously on brute-force augmenting additional real or synthetic images (over million-level), \eg,~\cite{gowal2020uncovering,carmon2019unlabeled,rebuffi2021data}. On the contrary, the intention of our method is to alleviate the strong need for the massive amount of data, from the perspective of modeling interdependency structure among data. Thus we do not compare with these data-demanding approaches. For fairness, in Tab.~\ref{table:benchmark}, we compare with the state-of-the-art additional-data-free methods in RobustBench with the same backbone and the same amount of training data.
Specifically, for CIFAR-10/100, we use the backbone of WideResNet-34-10 (WRN-34-10)~\cite{Zagoruyko2016Wide} with a network depth of 34 and width of 10.

\noindent
\textbf{Results:} We show the benchmark results in Tab.~\ref{table:benchmark}. For CIFAR-10, under the network WRN-34-10 with larger capacity, we achieve both the best performance of clean accuracy and AutoAttack robustness, compared to a handful of competitive adversarial defensive methods, where many of them suffer from the trade-off between clean and robust accuracy. The same conclusion can be drawn from the experiment on CIFAR-100 with larger model.

\subsection{Ablation Study}
\label{sec_ablation}
Here we conduct various ablation experiments to deeply analyze our method.

 

\noindent
\textbf{Global or Local Structure Preserving?}
The first question is whether the global or local structure should be preserved. The global scheme preserves the structure of random examples globally sampled from the dataset. In Tab.~\ref{ablation:global or local}, the robustness performance of the global scheme is much worse than the local scheme in our method.


\noindent
\textbf{Local Structure Preserving Loss Term:}
An important thing to consider is the optimal form of local structure preserving loss term as a regularizer. Here we empirically study the difference of several forms of discrepancy in Eqn.~\ref{eq:dpq,non-adversarial}, including minimizing the KL-divergence, maximizing the cosine similarity, and minimizing the L1/L2 distance. The results are shown in Tab.~\ref{table:ablation:loss}. As shown, all choices of local structure preserving terms achieve some improvements in robustness against different attacks. Among these, minimizing L2 distance works the best.

\noindent
\textbf{Computation Overheads:} Our method brings limited
extra overhead. For example, on CIFAR-10, the average training
time per iteration for \textsc{LSP} and its baseline \textsc{Vanilla} are
0.165s and 0.114s, for \textsc{LSP+} and its baseline \textsc{AT} are 0.456s and 0.414s.

\begin{table}[t]
\begin{center}
{\small
\begin{tabular}{lccr}
\toprule
Which to Preserve & None & Global & Local \\
\midrule
Clean & \textbf{94.56} & 93.56 & 94.22\\
CW & 3.68 & 44.17 & \textbf{62.53}\\
$\text{PGD}$ & 6.26 & 21.26 & \textbf{28.00} \\
\bottomrule
\end{tabular}
}
\end{center}
\vspace{-15pt}
\caption{The performance comparison of preserving the global and local structure on CIFAR-10.}
\label{ablation:global or local}
\vspace{-10pt}
\end{table}

\begin{table}[t]
\begin{center}
{\small
\begin{tabular}{lccccr}
\toprule
Loss  & K-L & Cosine & L1 & L2  \\
\midrule
Clean & 94.38 & 92.67 & 86.42 & 93.65  \\
CW & 40.88 & 56.52 & 57.17 & \textbf{58.14} \\
$\text{PGD}$ & 5.35 & 13.60 & 13.60  & \textbf{15.87}  \\
\bottomrule
\end{tabular}
}
\end{center}
\vspace{-15pt}
\caption{Various choices of local structure preserving loss terms, testing without adversarial training on the CIFAR-10 dataset.}
\label{table:ablation:loss}
\vspace{-10pt}
\end{table}

Due to the limited space, we leave other important ablation analyses in Supp., which include Neighborhood Area, Loss Weighting Parameter, Different Perturbation Norms, and Evaluation with Limited Training Data, \etc.

\section{Conclusion} 
\label{sec_5con}

In this paper, we rethink the defect of the ERM learning framework for adversarial robustness. We argue that treating each training sample as \emph{i.i.d.} may lead to the violation of the local structure of the output embedding space against the original input space, thus leading to learning a distorted network mapping that has poor adversarial robustness. To solve this issue, we propose a novel yet simple idea to preserve the local structure from the input space to the output embedding space, thus making the network mapping regular. From our experimental observations, such a simple regularization significantly and consistently improves the metrics of adversarial robustness against various adversarial attacks on multiple datasets. The advantage of our method lies in alleviating the tremendous need for the massive amount of data by modeling structure prior. Therefore, when training with the same amount of data, our method is also competitive against the state-of-the-art adversarial training methods. 

Different from brute force introducing more training data as the solution for adversarial robustness,  we intend to solve from another way by emphasizing \emph{non-i.i.d.} structural prior modeling to alleviate the yearning for data. For future work, the more delicate structural modeling scheme is worth to be studied and deeper theoretical certificates are necessary. 


{\small
\bibliographystyle{ieee_fullname}
\bibliography{egbib}
}

\begin{appendices}

In this Supplementary Material, we provide more detailed analysis of experiments (Sec.~\ref{sec:1}), and the proof of the theorem (Sec.~\ref{sec:2}) in the main paper.

\section{More Experimental Analysis}
\label{sec:1}

In this section, we will conduct more extensive experiments to demonstrate the rationality and effectiveness of the proposed method. Specifically, in Sec.~\ref{sec:1.1}, we provide more ablation studies that are not included in the main paper due to the space limit. In Sec.~\ref{sec:1.2}, we verify the fidelity of the local structure mined during the adversarial training of our algorithm, as noted in Sec. 3.2 of the main paper.

\subsection{More Ablation Studies}
\label{sec:1.1}

\noindent
\textbf{Evaluation with Limited Training Data:} Previously, we have claimed that the main advantage of the proposed LSP method is to  alleviate the notorious reliance on the massive amount of data to achieve adversarial robustness. This is due to the structure prior of the interdependency relationship among data we have utilized to reduce the number of samples for learning the complicated underlying distribution. This is fundamentally different from most of the previous adversarial training methods, which seek more data~\cite{carmon2019unlabeled,schmidt2018adversarially}.

Here, to better demonstrate the advantage of the proposed method in reducing learning samples, we conduct experiments with less training data and compare our method with state-of-the-art adversarial training baseline methods. In Tab.~\ref{table:adv-small}, we use 10k-40k random subset of CIFAR-10 (with a total of 50k samples) to adversarially train all methods. With limited training data, our method significantly outperforms the comparison methods, which demonstrates the effectiveness of the proposed LSP term.

\begin{table*}[t]
\begin{center}
{\scriptsize
\begin{tabular}{l cccc cccc cccc cccc}
 \toprule
 \multicolumn{1}{c}{\textbf{Methods}} & \multicolumn{4}{c}{\textbf{CIFAR-10 (10000)}} & \multicolumn{4}{c}{\textbf{CIFAR-10 (20000)}} & \multicolumn{4}{c}{\textbf{CIFAR-10 (30000)}} & \multicolumn{4}{c}{\textbf{CIFAR-10 (40000)}} \\ \cmidrule(rl){2-5} \cmidrule(rl){6-9} \cmidrule(rl){10-13} \cmidrule(rl){14-17} 
 & Clean & CW & PGD & AA & Clean & CW & PGD & AA & Clean & CW & PGD & AA & Clean & CW & PGD & AA \\
 \toprule
  \textsc{AT}     & \textbf{62.81} & 30.12 & 30.74 & 27.26 & 66.34 & 38.50 & 39.58 & 35.31 & \textbf{74.40} & 42.17 & 44.27 & 39.17 & 77.46 & 45.59 & 46.70 & 42.18\\
 \textsc{TRADES}  & 61.18 & 32.69 & 34.77 & 31.62 & 67.56 & 35.28 & 37.96 & 34.13 & 73.29 & 39.61 & 42.70 & 38.02 & 77.62 & 43.88 & 47.00 & 42.86\\
 \textsc{MART}    & 55.08 & 34.33 & 37.50 & 33.10 & 61.12 & 37.00 & 40.86 & 35.36 & 68.02 & 44.09 & 46.49 & 40.98 & 73.13 & 46.93 & 48.41 & 42.13\\
 \textsc{HAT}     & 60.85 & 33.69 & 34.89 & 32.10 & \textbf{69.79} & 36.99 & 39.64 & 34.17 & 73.37 & 43.37 & 45.47 & 40.73 & \textbf{80.28} & 43.96 & 46.44 & 42.40\\
 \midrule
 \textsc{LSP+}    & 62.27 & \textbf{37.23} & \textbf{39.94} & \textbf{35.18} & 68.66 & \textbf{40.20} & \textbf{42.37} & \textbf{37.85} & 74.23 & \textbf{46.83} & \textbf{48.58} & \textbf{42.10} & 79.48 & \textbf{47.13} & \textbf{49.98} & \textbf{44.25}\\
 
 \bottomrule
\end{tabular}
}
\end{center}
\caption{The classification accuracy comparison of methods by Adversarial Training. We use four subsets of CIFAR-10 (10k, 20k, 30k, and 40k) as the training data. Numbers in bold indicate the best. Note that compared to our baseline \textsc{AT}, we achieve stronger robustness against all attacks.}
\label{table:adv-small}
\end{table*}

\noindent
\textbf{Neighborhood Area}
In Tab.~\ref{ablation:neighbor}, when we choose different numbers of nearest neighbors in our algorithm, the results show that more neighbors could benefit the performance, since more useful structure prior information is involved in the learning process. This implicitly supports the asymptotic theoretical conjecture in Sec. 3.3 of the main paper.

\begin{table}[htbp]
\begin{center}
{
\begin{tabular}{lcccccr}
\toprule
\# Neighbors & 2 & 4 & 8 & 16 & 32 \\
\midrule
Clean & 93.37 & 93.63 & \textbf{94.22} & 93.78 & 94.03 \\
CW & 54.86 & 54.26 & 62.53 & 63.25 & \textbf{64.45}\\
PGD & 12.94 & 14.81 & 28.00 & 29.99 & \textbf{30.14} \\
\bottomrule
\end{tabular}
}
\end{center}
\caption{Various choices of the number of local neighborhood.}
\label{ablation:neighbor}
\end{table}

\noindent
\textbf{Loss Weighting Parameter:}
Recall that the total objective function for training our LSP method is as follows:
\begin{equation}
    \mathcal{L}_{\text{Total}} = \frac{1}{|\mathcal{X}|}\sum_{x\in\mathcal{X}} (\mathcal{L}_{\text{CE}} + \lambda \mathcal{L}_{\text{LSP}}),
    \label{eq:total,non-adversarial}
\end{equation}
where $\lambda$ is a balancing parameter of the two loss terms. Here we study the effect of the weighting parameter $\lambda$ in Eqn.~\ref{eq:total,non-adversarial}. As shown in Tab.~\ref{ablation:weighting}, for all the attacks, a too-small $\lambda$ can not thoroughly benefit the adversarial robustness via the proposed LSP term; while a too-large $\lambda$ will be harmful to the data fitting which caused performance degradation.

\begin{table}[h]
\begin{center}
{\small
\begin{tabular}{lccccccr}
\toprule
$\lambda$ & 0.1 & 0.5 & 1 & 2 & 5 & 10 \\
\midrule
Clean & \textbf{94.45} & 94.16 & 94.22 & 92.80 & 90.06 & 88.64\\
CW & 36.91 & 55.59 & 62.53 & \textbf{65.93} & 55.39 & 52.18\\
PGD & 8.98 & 19.08 & \textbf{28.00} & 25.58 & 15.51 & 16.12\\
\bottomrule
\end{tabular}}
\caption{Various choices of loss weighting parameter $\lambda$.}
\label{ablation:weighting}
\end{center}
\end{table}

\begin{table}[htbp]
\begin{center}
{\footnotesize
\begin{tabular}{lcccccr}
\toprule
Methods & \textsc{Vanilla} & \textsc{Mixup} & \textsc{CutMix} & \textsc{AugMix} & \textbf{LSP}  \\
\midrule
PGD-$L_\infty$ & 6.26 & 12.23 & 13.14 & 16.17 & \textbf{28.00}  \\
PGD-$L_2$ & 9.73 & 18.62 & 22.38 & 19.73  & \textbf{34.30} \\
\bottomrule
\end{tabular}}
\caption{Evaluation with different perturbation norms of PGD attack, where the budget for $L_2$ norm is 64/255.}
\label{table:l2}
\end{center}
\end{table}

\noindent
\textbf{Different Perturbation Norms} Previously we evaluate with adversarial attacks based on $L_{\infty}$-norm. Here we also test the robustness against the PGD attack with $L_2$-norm in Tab.~\ref{table:l2}. Here, the maximum perturbation budget for $L_2$ attack is 64/255, and the  attacking step is 10 with the step size of 15/255. Again, our method shows stronger robustness in the metric norm of $L_2$ perturbation.


\subsection{Visulization of Local Neighbors}
\label{sec:1.2}

Furthermore, in the method of proposed LSP with adversarial training (cf. Sec. 3.2 of the main paper), to erase the concern about whether the local structure among training samples learned by the proposed method is valid, we show the visualization results of the nearest neighbors mined by our method during different training epochs in Figure~\ref{fig:visualization}. For both CIFAR-10 and SVHN datasets, the neighbors of the anchor samples are random initially, but the semantic relationship grows clearly with the training epochs, indicating the local structure is improving to benefit the learning. In the rightmost of Figure~\ref{fig:visualization}, we also show the evolving curve of purity, which is defined as the average fraction of the neighbors that share the same semantic label with the anchor in all neighbors. For both datasets, the purity grows to over 90\%, implying the validity of the learned local structure.

\begin{figure*}[htbp]
    \centering
    \includegraphics[width=\linewidth]{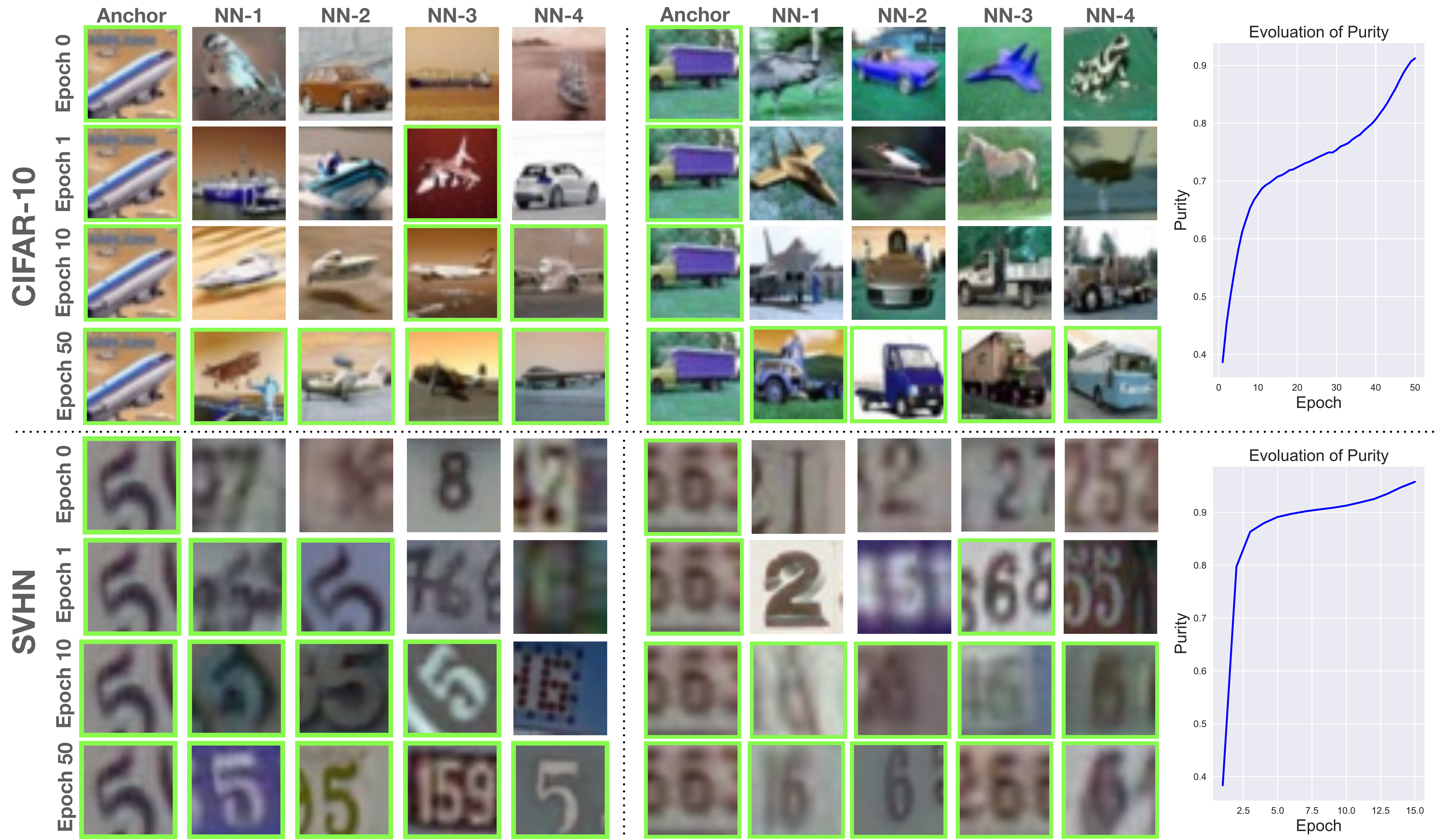}
    \caption{The visualization results of the mined local structure in the proposed LSP with adversarial training, for the dataset of CIFAR-10 (top) and SVHN (bottom). The 1-4 nearest-neighbors of the anchor are depicted, with the \textcolor{green}{green} frame indicating the same semantic label sharing with the anchor. It is clear that during training, the semantic similarity of the local neighbors with the anchor is increasing, also evidenced by the purity curve on the right.}
    \label{fig:visualization}
\end{figure*}

\section{Proof of Theoretical Analysis} \label{sec:2}

Recall the theorem in the main paper as follows. Here we give the proof of the theorem.

\begin{theorem} \label{theorem}
Suppose $a,b\in\mathcal{Y}$ are the most and second-most likely class prediction of the classifier $f$, i.e., 
\begin{equation}
    a = \arg\max_{l\in\mathcal{Y}} f(x)_l, \quad  
    b = \arg\max_{l\in\mathcal{Y}\backslash\{a\}} f(x)_l,
\end{equation}
and $p_a, p_b$ are the predictive probability of the classifier w.r.t. label $a$ and $b$, i.e.,
\begin{equation}
    p_a = f(x)_a, \quad p_b = f(x)_b.
\end{equation}
If function $f$ is locally $L_x$-Lipschitiz on input $x\in \mathcal{X}$, i.e., 
\begin{align}
    & \forall x\in \mathcal{X}, \forall x^\prime, \text{s.t. } ||x^\prime-x||\le\delta, \quad \\
    &||f(x) - f(x^\prime)|| \le L_x ||x-x^\prime||.
    \label{eq:theorem}
\end{align}
Then $f$ is guaranteed to be robust at $x$ up to any perturbation of magnitude $\delta \le \frac{1}{2L_x}(p_a-p_b)$.
\end{theorem}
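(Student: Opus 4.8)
The plan is to show that every perturbation $x'$ inside the $\delta$-ball leaves the classifier's ranking intact, i.e. $f(x')_a > f(x')_l$ for all $l \neq a$, which is precisely what ``robust at $x$'' means. The whole argument reduces to controlling, via the Lipschitz hypothesis, how far each coordinate of the output vector can move as we pass from $x$ to $x'$, and then checking that the winning coordinate stays on top.

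First I would record the elementary fact that a single component never moves more than the full vector: $|f(x')_l - f(x)_l| \le \|f(x') - f(x)\|$ for every $l$ (true for the $L_2$ or $L_\infty$ norms). Combining this with the local Lipschitz bound in Eqn.~\ref{eq:theorem} and $\|x'-x\| \le \delta$ gives the coordinate-wise estimate $|f(x')_l - f(x)_l| \le L_x\delta$. Next I would apply it in two directions. For the top class, $f(x')_a \ge f(x)_a - L_x\delta = p_a - L_x\delta$. For any competitor $l \neq a$, using that $b$ is by definition the second-most-likely class so $f(x)_l \le p_b$, I obtain $f(x')_l \le f(x)_l + L_x\delta \le p_b + L_x\delta$. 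Forcing the lower bound on $f(x')_a$ to dominate the upper bound on every $f(x')_l$ yields $p_a - L_x\delta > p_b + L_x\delta$, equivalently $\delta < \tfrac{1}{2L_x}(p_a - p_b)$, which is exactly the claimed robustness radius.

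The factor of two, the only mildly non-obvious feature, appears because the margin $f(x')_a - f(x')_l$ can erode from both ends at once: the winning logit may drop by $L_x\delta$ while a rival climbs by $L_x\delta$. The single point needing care is the boundary $\delta = \tfrac{1}{2L_x}(p_a - p_b)$, where the two bounds meet and the estimate only delivers the non-strict $f(x')_a \ge f(x')_l$; this is resolved either by a tie-breaking convention on $\arg\max$ or by reading Eqn.~\ref{eq:theorem} as a strict inequality on the open ball, a standard and harmless technicality. I expect no genuine obstacle here beyond bookkeeping the coordinate-wise versus vector norms consistently; the result is an off-the-shelf Lipschitz-margin bound and the proof is a short direct computation.
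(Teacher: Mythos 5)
Your proof is correct and follows essentially the same route as the paper's: bound the per-coordinate change by $L_x\delta \le \tfrac{1}{2}(p_a - p_b)$ via the Lipschitz hypothesis, lower-bound $f(x')_a$, upper-bound $f(x')_l$ for $l \neq a$, and conclude the top class cannot be overtaken. Your explicit treatment of the boundary case $\delta = \tfrac{1}{2L_x}(p_a-p_b)$ (where both bounds meet at $\tfrac{p_a+p_b}{2}$ and only a non-strict inequality survives) is actually slightly more careful than the paper, which silently passes over this tie.
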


\begin{proof}
Given $x$, for $\forall y\in \mathcal{Y}$, applying Lipschitz condition of $f$, if $x^\prime$ lies in the $\delta$-neighborhood of $x$, we have:
\begin{equation}
    |f(x^\prime)_y-f(x)_y| \le L_x ||x^\prime - x|| \le L_x \delta \le \frac{1}{2}(p_a - p_b).
\end{equation}
Assign $y=a$, we have:
\begin{align}
    f(x^\prime)_a &\ge f(x)_a - |f(x^\prime)_a-f(x)_a| \notag\\
    & \ge p_a - \frac{1}{2}(p_a - p_b) = \frac{p_a+p_b}{2}.
\end{align}
Assign $y$ as $\forall l\ne a$, we have:
\begin{align}
    f(x^\prime)_l & \le f(x)_l + |f(x^\prime)_l-f(x)_l| \notag\\
    & \le p_b + \frac{1}{2}(p_a-p_b) = \frac{p_a+p_b}{2}
\end{align}
Thus,
\begin{equation}
    \arg\max_{y\in \mathcal{Y}} f(x^\prime)_y = a.
\end{equation}
\end{proof}

From our theoretical analysis in Sec 3.3, our proposed Local Structure Preserving (LSP) is motivated by the goal of bounding the Lipschitz constant of the learned network. Whereas the relationship between bounding the Lipschitz constant and enhancing robustness is broadly-known and well-studied. This is the insight that our LSP term helps enhance robustness.

From another perspective, the ERM-based learning framework treats each training sample as \emph{i.i.d.}, thus may inevitably introduce the ``pockets'' appearing in the data manifold, which causes the phenomenon of adversarial vulnerability~\cite{Szegedy2014Intriguing}. While our method treats each training sample as \emph{non-i.i.d.} and considers preserving the local manifold around each training sample, which will intuitively reduce the possibility of such ``pockets'' thus enhancing the robustness.

\end{appendices}

\end{document}